\pdfoutput=1
\documentclass[5p,times]{elsarticle}
\usepackage[T1]{fontenc}
\usepackage{amsmath,amssymb,amsthm}
\usepackage{array,booktabs,tabularx}
\usepackage{graphicx}
\usepackage{xcolor}
\usepackage{microtype}
\usepackage[hidelinks]{hyperref}
\usepackage{xurl}
\newtheorem{proposition}{Proposition}
\newtheorem{definition}{Definition}
\theoremstyle{remark}
\newtheorem{remark}{Remark}
\newcommand{\auditBenign}{22}
\newcommand{\auditBenignPct}{55}
\newcommand{\auditError}{4}
\newcommand{\auditErrorPct}{10}
\newcommand{\auditMaterial}{14}
\newcommand{\auditMaterialHi}{50}
\newcommand{\auditMaterialLo}{22}
\newcommand{\auditMaterialPct}{35}
\newcommand{\auditN}{40}

\newcommand{\auditReal}{36}
\newcommand{\auditRealHi}{96}
\newcommand{\auditRealLo}{77}
\newcommand{\auditRealPct}{90}
\newcommand{\avoidCode}{81.6}
\newcommand{\avoidDecl}{83.4}

\newcommand{\avoidFail}{45.3}
\newcommand{\causeCodePct}{88}
\newcommand{\ccaoAfter}{141}
\newcommand{\ccaoBefore}{264}
\newcommand{\ccaoEval}{50}
\newcommand{\ccaoStale}{83}
\newcommand{\ccaoStalePct}{31}
\newcommand{\chCodeStale}{8.9}
\newcommand{\chConfigStale}{5.7}
\newcommand{\chDataStale}{0.9}

\newcommand{\cmEnv}{12}

\newcommand{\costCode}{1.8}

\newcommand{\evalUnsafeDecl}{0.9}
\newcommand{\execCode}{18.4}
\newcommand{\execCodeHi}{27.0}
\newcommand{\execCodeLo}{12.8}
\newcommand{\execDecl}{16.6}
\newcommand{\execDeclHi}{25.0}
\newcommand{\execDeclLo}{11.4}
\newcommand{\execEnv}{26.2}
\newcommand{\execEnvHi}{35.3}
\newcommand{\execEnvLo}{19.8}
\newcommand{\execFail}{54.7}
\newcommand{\execFailHi}{62.2}
\newcommand{\execFailLo}{48.6}
\newcommand{\execRefCode}{18.7}
\newcommand{\gateNineP}{0.40}
\newcommand{\gatePerRecord}{42}
\newcommand{\gateThousandP}{45}
\newcommand{\grAgreePct}{97}
\newcommand{\grCommits}{30,998}
\newcommand{\grEteMaxS}{5.2}
\newcommand{\grEteP}{44.3}
\newcommand{\grGateP}{0.75}
\newcommand{\grRepos}{714}
\newcommand{\grStagesMax}{150}
\newcommand{\grUnsafe}{1,237}

\newcommand{\hyAny}{54}
\newcommand{\hyEntry}{6}
\newcommand{\hyImport}{74}
\newcommand{\hyRepos}{711}
\newcommand{\hyReposAny}{469}
\newcommand{\hyReposAnyPct}{66}
\newcommand{\hyStages}{3,969}

\newcommand{\medianCommits}{23}
\newcommand{\medianStages}{4}
\newcommand{\nCandidates}{3,524}
\newcommand{\nCloned}{2,087}
\newcommand{\nCommits}{38,083}
\newcommand{\nPyStage}{1,644}
\newcommand{\nRepos}{714}
\newcommand{\nRootDvc}{2,130}
\newcommand{\nTenCommits}{726}
\newcommand{\nTwoStage}{1,880}
\newcommand{\pbMem}{97}
\newcommand{\pbPNinetyFive}{9.9}
\newcommand{\planThousandP}{1}

\newcommand{\reposAffected}{218}
\newcommand{\reposAffectedPct}{31}
\newcommand{\robCodeHi}{0.5}
\newcommand{\robCodeLo}{0.1}
\newcommand{\robHi}{4.8}
\newcommand{\robLo}{3.0}
\newcommand{\roleDataStale}{2.0}
\newcommand{\roleEvalShare}{29}
\newcommand{\roleEvalSlots}{15}
\newcommand{\roleEvalStale}{4.9}
\newcommand{\roleOtherStale}{1.7}
\newcommand{\roleTrainStale}{2.8}

\newcommand{\shDetTen}{46}
\newcommand{\shFixTen}{22}
\newcommand{\shFixTenMat}{12}
\newcommand{\shRho}{10}
\newcommand{\shSwitchTen}{55}
\newcommand{\shSwitchTenMat}{41}
\newcommand{\shUpfront}{87}
\newcommand{\shadowQ}{0.35}
\newcommand{\tpCandTwelve}{3,580}
\newcommand{\tpOne}{21,849}
\newcommand{\tpTwelve}{179,010}

\newcommand{\unsafeAllCode}{10.5}
\newcommand{\unsafeAllDecl}{13.2}
\newcommand{\unsafeAllEnv}{0.4}
\newcommand{\unsafeAllFail}{0.4}
\newcommand{\unsafeCode}{0.4}
\newcommand{\unsafeCodeHi}{0.7}
\newcommand{\unsafeCodeLo}{0.3}
\newcommand{\unsafeDecl}{3.3}
\newcommand{\unsafeDeclHi}{4.5}
\newcommand{\unsafeDeclLo}{2.4}
\newcommand{\unsafeEnv}{0.4}
\newcommand{\unsafeEnvHi}{0.6}
\newcommand{\unsafeEnvLo}{0.2}
\newcommand{\unsafeFail}{0.4}
\newcommand{\unsafeFailHi}{0.6}
\newcommand{\unsafeFailLo}{0.2}

\journal{Future Generation Computer Systems}
\makeatletter
\def\ps@pprintTitle{%
  \let\@oddhead\@empty
  \let\@evenhead\@empty
  \def\@oddfoot{\hfil\thepage\hfil}%
  \let\@evenfoot\@oddfoot}
\makeatother

\begin{document}

\begin{frontmatter}

\title{Reusing Assurance Evidence in AI Pipelines: Validity Conditions, a Fail-Closed Release Gate, and a Replay of \nRepos{} Public Data Version Control Pipelines}

\author[inst1]{Atanu Mukhopadhyay}
\author[inst2]{Shravya Reddy Nalla}
\affiliation[inst1]{organization={Independent Researcher},
            state={New Jersey},
            country={USA}}
\affiliation[inst2]{organization={Independent Researcher},
            country={USA}}

\begin{abstract}
Rerunning every governance control after every change to an artificial intelligence (AI) pipeline is expensive, so teams reuse earlier results. Reuse is safe only if each control's declared dependencies cover everything that can change its outcome, and only if the release gate checks that reused evidence describes the candidate being released. This article makes both conditions explicit and measures how often the first fails in practice. It defines when a signed evidence record is valid for a candidate and gives sufficient conditions under which change-local re-execution reaches the same release decision as rerunning every control. A reference gate implementing these checks with Ed25519 digital signatures blocks eight of nine manipulation scenarios, against one for a gate that trusts the planner and three for a gate that only verifies signatures; a dependency the policy never declared passes every gate. A replay of \nCommits{} commits from \nRepos{} public Data Version Control (DVC) pipelines measures that failure mode. Reusing stage results according to declared dependencies avoids \avoidDecl\% of stage executions, but in \unsafeDecl\% of commits it reuses a stage whose imported code or referenced configuration changed, and \reposAffectedPct\% of pipelines are affected. A manual audit confirms the undeclared dependency in \auditReal{} of \auditN{} sampled events. Evaluation stages, whose outputs are the evidence, are the most exposed. Adding each stage's static import closure lowers stale reuse to \unsafeCode\% while still avoiding \avoidCode\% of executions; sampled shadow re-execution removes far less, and rerunning everything after unexplained changes avoids only \avoidFail\%.
\end{abstract}

\begin{keyword}
AI governance \sep continuous assurance \sep MLOps \sep incremental computation \sep data provenance \sep mining software repositories
\end{keyword}

\end{frontmatter}

\section{Introduction}\label{sec:intro}

An AI release is assembled from data, feature code, training code, configuration, model weights, evaluation sets, and the environment that runs them. Governance controls examine that assembly: data validation, performance and fairness evaluation, robustness and privacy tests, security scans, and provenance checks. Running all of them after every change is defensible but costly, and much of the cost buys nothing when a change touches only documentation or a component that a control never reads. Pipelines therefore reuse earlier results, sometimes explicitly through stage caches such as DVC~\cite{dvc2026} and the caching built into ML platforms~\cite{baylor2017tfx}, and sometimes by convention.

Reuse can fail in two different ways. The reuse decision can be wrong because the control depends on something its declaration omits, so a cached result describes a computation that no longer exists. Or the evidence presented at release can differ from what it claims to be: produced for an earlier artifact version, copied from another deployment, edited after signing, signed by the team whose work it certifies, or produced under a superseded policy. Recent systems make governance requirements and evidence machine-checkable~\cite{muhammad2026auditascode,paul2026governanceascode,sharma2026okb,wen2026saco}. To our reading of their published descriptions, none of them defines when a cached result may stand in for a fresh one after the pipeline changes, and no study we are aware of measures how often the dependency declarations of real ML pipelines are complete enough for reuse to be safe.

This article addresses both problems and asks four questions.
\begin{itemize}
\item[] \textbf{RQ1.} Which manipulations of reused evidence does a gate that checks evidence validity stop, and which does it not?
\item[] \textbf{RQ2.} In real ML pipelines, how much re-execution does dependency-based reuse avoid, and how often does it reuse a result whose undeclared inputs changed?
\item[] \textbf{RQ3.} How do alternative invalidation rules, including sampled shadow re-execution, trade avoided work against stale reuse, and what does verification cost at scale?
\item[] \textbf{RQ4.} When a stage is reused stale, does executing it again actually produce different output?
\end{itemize}

The contributions are as follows. First, a validity model for signed assurance evidence and sufficient conditions under which change-local re-execution is decision-equivalent to exhaustive re-execution (Section~\ref{sec:model}). The model separates the planner, which only decides what to run, from the gate, which decides what may be released and never consults the planner. Second, a reference implementation of the planner and gate with Ed25519 signatures~\cite{bernstein2012ed25519,rfc8032}, evaluated against nine manipulation scenarios and two weaker gate designs that are common in practice (Section~\ref{sec:proto}). Third, a replay of \nCommits{} first-parent commits from \nRepos{} public DVC pipelines that compares five invalidation rules against a static reference of each stage's code and configuration dependencies (Sections~\ref{sec:design}--\ref{sec:results}). The replay is checked three ways: a manual audit of sampled events, sandboxed re-execution of flagged stages before and after the change, and a case study of a public-sector valuation pipeline whose maintainers later corrected the same gap. Fourth, an evaluation of sampled shadow re-execution on the real sequence of stale-reuse events, and measurements of planning and verification cost, including parallel throughput.

The study does not claim that static analysis recovers every semantic dependency, and the public repositories it mines are not a sample of industrial pipelines. Its narrower conclusion is that the completeness of dependency declarations, rather than cryptography or gate logic, determines whether evidence reuse is safe, and that inexpensive automatic augmentation of those declarations recovers most of the lost safety.

\section{Background and related work}\label{sec:related}

\subsection{Executable governance and assurance evidence}

Risk-management frameworks organize AI governance around mapping, measuring, and managing risk~\cite{nist2023airmf,iso42001}, and documentation artifacts record evaluation context for models, datasets, and services~\cite{mitchell2019modelcards,gebru2021datasheets,arnold2019factsheets}. Internal auditing frameworks connect these artifacts to an end-to-end review process~\cite{raji2020accountability}. Several 2026 systems make the requirements themselves executable. Audit-as-Code maps governance requirements to versioned policy and executable checks over evidence bundles, with risk-tiered gate decisions in CI/CD~\cite{muhammad2026auditascode}. Governance-as-Code defines 43 machine-checkable acceptance criteria for EU AI Act obligations, implemented as Rego policies, and validates them on two enterprise deployments~\cite{paul2026governanceascode}. Ontological Knowledge Blocks compile regulatory obligations into SHACL constraints over evidence graphs with PROV-O provenance and evaluate them in an AI-assisted HPC resource-allocation scenario~\cite{sharma2026okb}. SACO represents assurance context, its provenance, and explicit information gaps as authoritative semantic structures~\cite{wen2026saco}. These systems establish that evidence can be checked mechanically. The question studied here is different: when an existing evidence record may be reused for a new candidate, and how often the inputs that decide that question are declared incompletely.

\subsection{Provenance and attestation}

W3C PROV and OpenLineage describe entities, activities, and derivations for data and pipeline lineage~\cite{w3c2013provo,openlineage2026}. Software supply-chain systems sign statements about who produced an artifact from which inputs: in-toto chains signed step metadata~\cite{torresarias2019intoto}, SLSA grades the trustworthiness of build provenance~\cite{slsa2023}, and Sigstore makes signing and verification routine for open-source artifacts~\cite{newman2022sigstore}. The evidence records in this article follow the same pattern for governance controls. They bind a measurement to the digests of its inputs, the policy, and the deployment context, and they are signed by an executor that the policy authorizes for that control.

\subsection{Incremental computation and its correctness}

Reusing a result when its inputs are unchanged is the central idea of incremental builds and of regression test selection. Build systems can be characterized by how they decide what to rebuild~\cite{mokhov2018build}. Safe regression test selection reruns every test whose dependencies a change touches~\cite{rothermel1997safe}, and dynamic dependency tracking makes this practical at scale~\cite{gligoric2015ekstazi}. The correctness of both depends on complete dependency information, and that information is often wrong. Unspecified dependencies in make-based builds have been studied empirically~\cite{bezemer2017unspecified}, build fuzzing uncovered faulty build rules in 31 open-source projects~\cite{licker2019detecting}, and Rattle avoids declared dependencies altogether by tracing them~\cite{spall2020rattle}. This article transfers that question from build outputs to assurance evidence in ML pipelines, where a missed dependency leaves a governance decision resting on a result for a different computation.

\subsection{ML pipelines and pipeline-versioning tools}

Production ML depends on automated pipelines, data validation, and tests~\cite{sculley2015debt,breck2017mltestscore,polyzotis2017data,schelter2018deequ,kreuzberger2023mlops}, and asset-management tools track the data, models, and pipelines involved~\cite{idowu2022asset}. DVC records pipeline stages in a \texttt{dvc.yaml} file that lists each stage's command, dependencies, parameters, and outputs, and reruns a stage only when its definition, a declared dependency, or a declared parameter changes~\cite{dvc2026}. Barrak et al.\ studied how DVC files co-evolve with source code in 391 GitHub projects and found tight coupling between them~\cite{barrak2021dvc}. A large-scale study of Jupyter notebooks found that few of them reproduce their stored results~\cite{pimentel2019notebooks}. We are not aware of work that measures whether DVC dependency declarations cover what stages actually read, or what that implies for reusing evidence. Table~\ref{tab:positioning} summarizes the positioning.

\begin{table*}[t]
\centering
\caption{Positioning relative to the closest lines of work.}\label{tab:positioning}
\small
\begin{tabularx}{\textwidth}{@{}>{\raggedright\arraybackslash}X*{4}{>{\centering\arraybackslash}p{2.1cm}}>{\raggedright\arraybackslash}p{2.3cm}@{}}
\toprule
Line of work & Checkable evidence & Reuse validity conditions & Independent fail-closed gate & Measures declaration completeness & Domain \\
\midrule
Executable governance~\cite{muhammad2026auditascode,paul2026governanceascode,sharma2026okb,wen2026saco} & Yes & No & Yes & No & AI governance \\
Supply-chain attestation~\cite{torresarias2019intoto,slsa2023,newman2022sigstore} & Yes & Partly & Yes & No & Software artifacts \\
Build and test dependency studies~\cite{bezemer2017unspecified,licker2019detecting,spall2020rattle} & No & Yes & No & Yes & Builds \\
DVC co-evolution~\cite{barrak2021dvc} & No & No & No & No & ML pipelines \\
This article & Yes & Yes & Yes & Yes & ML pipelines \\
\bottomrule
\end{tabularx}
\end{table*}

\section{Problem and threat model}\label{sec:threat}

A \emph{candidate} is a proposed release. It consists of a set of artifacts identified by content digests, a deployment context (intended use, deployment setting, risk tier), and the digest of the policy it was built under. A \emph{control} is a check that produces an \emph{evidence record}: a measurement and a pass or fail status, bound to the digests of the artifacts the control read, the policy digest, the context digest, a validity period, and the identity of the executor that produced it. The policy lists, for each risk tier, the controls that are mandatory, each control's declared dependencies, its maximum evidence age, and the executor keys allowed to sign its evidence. After a change, a \emph{planner} selects controls to run, and a \emph{gate} decides whether the candidate may be released.

The threat model covers faults and misbehavior inside an organization that runs its own pipeline. The planner may be buggy or misconfigured. Teams may present evidence from an earlier artifact version or another deployment, edit stored records, sign their own results, rely on expired evidence, release under a policy that changed during the run, or declare a lower risk tier to shrink the mandatory set. A dependency may simply be missing from a control's declaration, with no intent to deceive. The model assumes that the signing keys of authorized executors are not compromised, that an authorized executor reports what it measured, and that the gate's code and the active policy are protected. Attacks on those components need the broader supply-chain controls cited above and are out of scope.

\section{Evidence validity model}\label{sec:model}

\subsection{Validity and admissibility}

Let $a$ be a candidate with artifact digests $\nu_a(\cdot)$, context $\chi_a$, policy digest $\pi_a$, and risk tier $r_a$. Let $P$ be the active policy with digest $\pi_P$, mandatory sets $M_P(r)$, declared dependencies $D_P(k)$, maximum ages $\tau_P(k)$, and authorized signers $S_P(k)$ for each control $k$.

\begin{definition}[Valid evidence]
An evidence record $q$ for control $k$ is \emph{valid} for candidate $a$ at time $t$ if all of the following hold: (i) its signature verifies under a key in $S_P(k)$; (ii) its contract digest equals the digest of $k$'s contract in $P$ (its dependencies, maximum age, and authorized signers); (iii) its context digest equals the digest of $\chi_a$; (iv) its recorded dependency set equals $D_P(k)$ and, for every $d\in D_P(k)$, its recorded digest equals $\nu_a(d)$; (v) $t \le q.t_{\mathrm{exec}}+\tau_P(k)$; and (vi) its status is pass.
\end{definition}

Binding a record to its own control's contract, rather than to the whole policy, means that editing one control's contract does not invalidate the evidence of the others; admissibility still binds the candidate as a whole to the active policy. Condition (iv) makes validity relative to the \emph{declared} dependencies. Equality of digests is the default. A control-specific equivalence, for example one that ignores formatting changes in a configuration file, must be part of the policy and is itself evidence that can be reviewed.

\begin{definition}[Admissibility]
Candidate $a$ is \emph{admissible} at time $t$ if $\pi_a=\pi_P$ and, for every $k\in M_P(r_a)$, the bundle presented to the gate contains a record valid for $a$ at $t$.
\end{definition}

The risk tier enters the mandatory set, so it cannot be self-declared. The prototype treats risk classification as a mandatory control whose evidence binds the context digest. A candidate that claims a lower tier presents a context whose digest no longer matches that record.

\subsection{Decision equivalence}

Let $\Phi_k(a)$ be the decision of control $k$ on candidate $a$, and let $D^{*}(k)$ be the smallest set of inputs (artifacts, configuration, code, environment) whose values can change $\Phi_k$. A declaration is \emph{sound} for $k$ if $D^{*}(k)\subseteq D_P(k)$. Exhaustive evaluation $F(a)$ runs every mandatory control. Incremental evaluation $I(a)$ reuses a stored record for $k$ whenever it is valid for $a$ and runs $k$ otherwise.

\begin{proposition}[Decision equivalence]\label{prop:equiv}
$I(a)$ and $F(a)$ return the same release decision if (1) the declaration is sound for every mandatory control; (2) every value in each $D_P(k)$ is versioned by digest or compared under an explicit equivalence; (3) each control's decision is a deterministic function of its declared inputs, or exhaustive and incremental evaluation apply the same statistical decision rule; and (4) the gate implements admissibility.
\end{proposition}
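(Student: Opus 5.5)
The proof compares the two evaluations control by control; the release decision then agrees because both are filtered through the same gate. First I fix what the release decision is. Under condition (4), both $F(a)$ and $I(a)$ hand the gate a bundle with one record per $k\in M_P(r_a)$. The gate releases iff $\pi_a=\pi_P$ and every such record is valid for $a$ at $t$, hence in particular has status pass. The policy-digest check and the mandatory set $M_P(r_a)$ are the same in both runs, since they depend only on $a$ and $P$. It therefore suffices to show, for each mandatory $k$, that the record $I$ supplies is valid and passing iff the record $F$ supplies is. I would also note that freshly executed records under honest authorized executors satisfy (i)--(v) of the validity definition by construction, so for them validity reduces to (vi), i.e.\ to $\Phi_k(a)$.

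Next I split each mandatory $k$ into two cases according to what $I$ does. If $I$ reruns $k$, it produces a fresh record, exactly as $F$ does. By condition (3), the two executions yield the same decision: either $\Phi_k$ is a deterministic function of the declared inputs, or both evaluations apply the same statistical decision rule, in which case equivalence is understood relative to that rule. If $I$ reuses a stored record $q$, then by definition of $I$ the record $q$ is valid for $a$, so in particular it passes. I must show that $F$'s fresh run of $k$ on $a$ also passes. The argument runs as follows. Validity (iv) gives $q$'s recorded digests equal to $\nu_a(d)$ for every $d\in D_P(k)$. By condition (2), digest equality (or the declared equivalence) implies that the input values themselves coincide with those seen when $q$ was produced, say on an earlier candidate $a'$. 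By soundness (1), $D^{*}(k)\subseteq D_P(k)$, so every input that can change $\Phi_k$ agrees between $a'$ and $a$; hence $\Phi_k(a)=\Phi_k(a')=\text{pass}$ using (3). Validity (ii) and (iii) ensure that the contract and context, which may also parametrize the control, are the same. The conclusion is that every mandatory control yields a passing valid record in $I$ iff it does in $F$, so the gate decisions coincide.

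The step I expect to be the main obstacle is making ``$D^{*}(k)$ is the smallest set whose values can change $\Phi_k$'' precise enough to justify $\Phi_k(a)=\Phi_k(a')$ from agreement on a superset. I would formalize $\Phi_k$ as a function of a full input assignment and define $D^{*}(k)$ so that agreement on $D^{*}(k)$ forces equal outputs. Soundness plus agreement on $D_P(k)$ then gives agreement on $D^{*}(k)$ immediately. A second subtlety concerns the direction ``$F$ fails implies $I$ fails.'' In that case $F$ returns a failing decision for $k$, so by the argument above no passing valid stored record can exist; $I$ must therefore rerun $k$ and, by (3), obtain the same fail. Expiry (v) needs no special treatment, because an expired record simply forces a rerun.
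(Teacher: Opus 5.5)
Your proposal is correct and follows essentially the same route as the paper's proof sketch: partition the mandatory controls into those with a valid stored record, which $I$ reuses, and the rest, which both strategies re-execute. For reused controls, validity (iv) with condition (2) gives equal declared inputs, soundness extends this to $D^{*}(k)$, condition (3) gives equal per-control decisions, and both strategies then face the same admissibility predicate. Your extra points fill in details the sketch leaves implicit rather than changing the argument: fresh records satisfy (i)--(v) by construction, (ii)--(iii) pin down contract and context, and $D^{*}(k)$ should be formalized so that agreement on it forces equal outputs.
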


\begin{proof}[Proof sketch]
Partition the mandatory controls into those with a valid stored record and the rest. The rest are re-executed under both strategies. For a control with a valid record, condition (iv) of validity and condition (2) give equal declared inputs; soundness extends this to every input that can affect the decision, and condition (3) then gives the same decision. Both strategies therefore present the same per-control decisions to the same admissibility predicate. If soundness fails, an input outside $D_P(k)$ can change $\Phi_k$ while the stored record remains valid, which is a counterexample.
\end{proof}

\begin{proposition}[Planner independence]\label{prop:planner}
If the gate implements admissibility and derives $M_P(r_a)$ from the active policy, then no planner output can cause the release of a candidate that lacks a valid record for some mandatory control. Planner faults can only cause extra executions or holds.
\end{proposition}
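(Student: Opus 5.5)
The plan is to argue directly from the definition of admissibility, showing that the gate's decision is a function only of the active policy $P$, the candidate $a$, the presented bundle, and the time $t$, and never of anything the planner outputs. Model the planner as an arbitrary function whose output is a set of controls to execute, possibly wrong, empty, or adversarially chosen. Its only effect on the world is through which executors run, and hence which records exist in the store and may be placed in the bundle. Under the threat model, executors' keys are not compromised and authorized executors report honestly. So the planner cannot manufacture a record that verifies under a key in $S_P(k)$ except by causing an authorized executor to actually run $k$ on the real inputs.

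The argument then proceeds by contraposition. Suppose the gate releases $a$ at time $t$. By hypothesis the gate implements admissibility, so $\pi_a=\pi_P$ and, for every $k\in M_P(r_a)$, the bundle contains a record valid for $a$ at $t$. The key step is that the quantifier ranges over $M_P(r_a)$ as computed by the gate from the active policy $P$, not over any list supplied by the planner. Two points support this. First, $r_a$ itself cannot be lowered by a planner or a team, by the remark following the admissibility definition: risk classification is a mandatory control whose record binds the context digest $\chi_a$, so a misstated tier fails condition (iii) for that record. Second, conditions (i)--(vi) of validity are evaluated against $P$ and $\nu_a$ alone. It follows that every mandatory control has a valid record, which contradicts the assumption that some mandatory control lacks one. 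Hence no planner output can cause such a release.

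For the second sentence, I would enumerate the two ways a planner can deviate from ideal behaviour. \emph{Over-selection} runs controls whose valid records already existed; this produces extra executions but leaves admissibility unchanged, or only adds valid records. \emph{Under-selection} fails to run some $k$ whose stored record is not valid for $a$; then no valid record for $k$ is in the bundle, admissibility fails, and the gate holds. Since these cases exhaust the ways the planner's set can differ from the ideal one, planner faults yield only extra executions or holds.

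The main obstacle is making precise that the planner has no other channel into the gate's decision. For example, a planner might try to fabricate or alter records in the store, or supply the mandatory set or tier. I would close this by appealing explicitly to the threat-model assumptions (uncompromised keys, protected gate code and policy) and to the hypothesis that $M_P(r_a)$ is derived from $P$. Fabricated or edited records fail condition (i), and stale or mismatched ones fail conditions (ii)--(v).
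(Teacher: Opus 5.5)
Your proposal is correct and rests on the same observation as the paper's one-line proof: admissibility is a predicate over the candidate, the active policy, the presented bundle, and the time alone, so the planner's output never enters the gate's decision, and any released candidate has, by definition, a valid record for every control in $M_P(r_a)$. The threat-model appeal and the risk-tier argument are not needed, because validity is checked against $P$ however a record came to exist and the tier is not a planner output; your over-/under-selection split is a reasonable spelling-out of the second sentence, which the paper leaves implicit.
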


\begin{proof}
Admissibility refers only to the candidate, the active policy, and the presented bundle. The planner's output does not appear in it.
\end{proof}

\begin{remark}
Validity is checked against $D_P(k)$, not $D^{*}(k)$. When a declaration is unsound, a record can be valid and still describe a superseded computation, and no gate can tell. The defense is mapping assurance: deriving dependencies automatically where possible, failing closed on changes nobody declared, and sampling reused controls for shadow re-execution. If a fraction $\rho$ of reused controls is re-executed and compared, an unsound declaration that is exercised in $n$ releases escapes detection with probability $(1-\rho)^n$.
\end{remark}

\subsection{Planner}

Given a candidate $a$ with predecessor $a'$, the planner runs
\begin{equation}
\begin{aligned}
K_{\mathrm{exec}} = \Big(&\{k : D_P(k)\cap\Delta(a,a')\neq\emptyset\}\\
&\cup \{k\in M_P(r_a): \nexists\,q \text{ valid}\} \cup K_{\mathrm{trig}}\Big)\cap M_P(r_a),
\end{aligned}
\label{eq:plan}
\end{equation}
where $\Delta(a,a')$ is the set of artifacts whose digests differ and $K_{\mathrm{trig}}$ contains controls triggered by runtime events. If $\Delta$ contains an artifact that no control declares, the planner fails closed and runs every mandatory control. Expired evidence is covered by the second term.

\section{Reference implementation}\label{sec:proto}

The prototype is written in Python 3.14 and uses the \texttt{cryptography} package for Ed25519. It has three parts: signed evidence records, a planner implementing Eq.~(\ref{eq:plan}), and a gate implementing admissibility (Figure~\ref{fig:arch}). Records are serialized as canonical JSON (sorted keys, no insignificant whitespace), and each executor holds its own signing key. The policy bundle and each control's contract are canonicalized and hashed, so every record names the exact contract it was produced under and every candidate the exact policy. The prototype and the analysis scripts were written with the assistance of AI coding tools (OpenAI Codex and Anthropic Claude); the authors specified the designs and analyses, reviewed the code, and checked the reported values against the result files. The gate indexes the presented bundle by control, re-derives the mandatory set from the active policy and the candidate's context, and returns release or hold with a reason for every unmet requirement. It never reads the planner's output.

\begin{figure*}[t]
\centering
\includegraphics[width=0.66\textwidth]{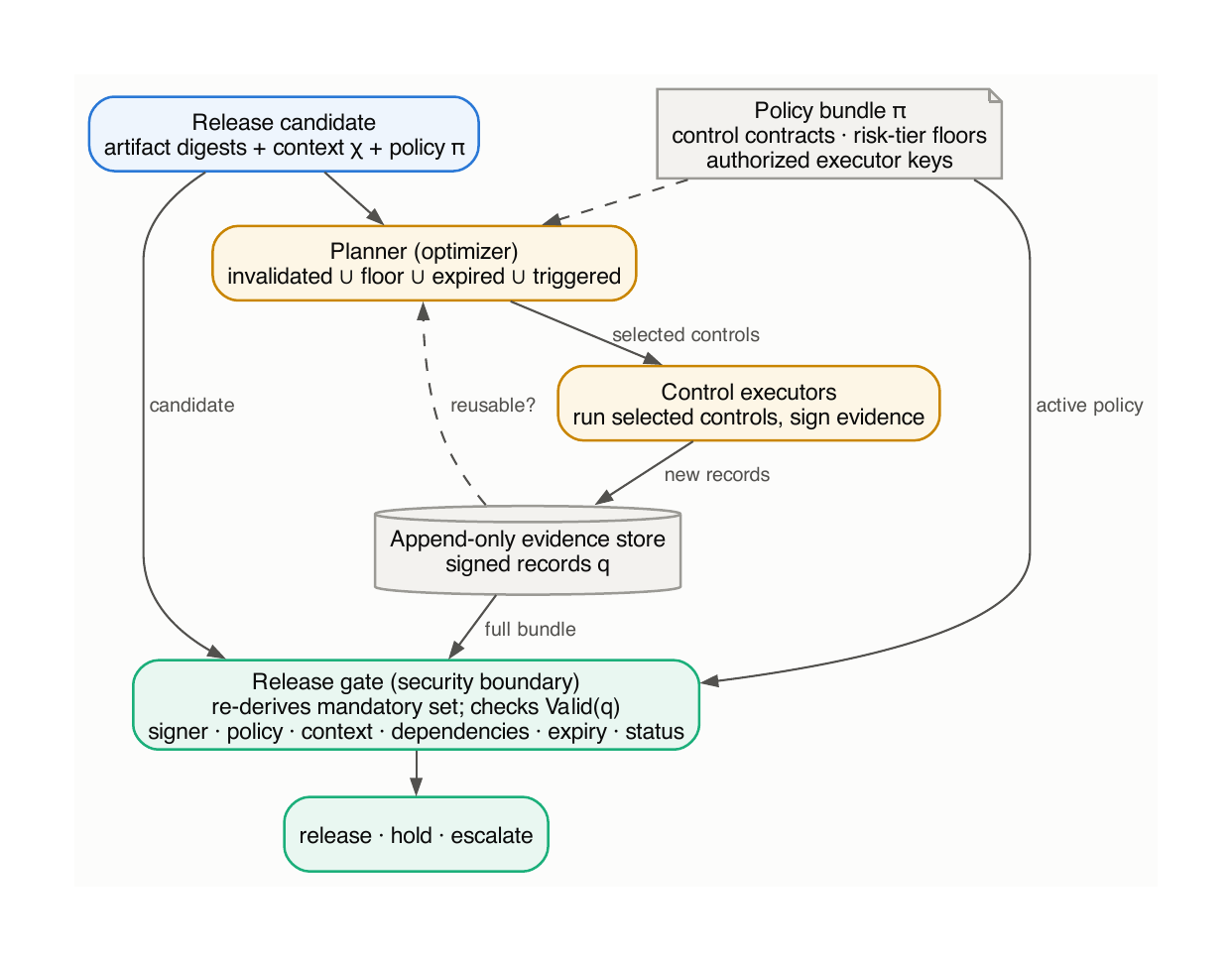}
\caption{Separation of the planner and the gate. The planner decides which controls to run and may reuse stored evidence. The gate re-derives the mandatory set from the active policy and checks every presented record for signer authorization, policy and context binding, dependency digests, freshness, and status.}\label{fig:arch}
\end{figure*}

For comparison, the gate can run in two weaker modes that resemble common practice. A \emph{planner-trusting} gate releases when every control the planner scheduled passed, as in a CI pipeline that promotes when all jobs are green. A \emph{signature-only} gate verifies that each mandatory control has a signed, passing record and that the candidate names the active policy, but does not check who signed each record, under which contract or context it was produced, or for which artifact versions.

\section{Evaluation design}\label{sec:design}

\subsection{RQ1: manipulation scenarios}

Each scenario starts from a release with complete, valid evidence, applies a change, and then applies one manipulation (Table~\ref{tab:scenarios}). The policy has nine controls, three risk tiers, and one authorized executor per control. A separate model-team key is registered in the keyring but not authorized for any assurance control. Two scenarios are legitimate reuse and must release; one is an honest failure and must hold; nine are manipulations or faults that must hold.

\subsection{RQ2--RQ3: replay of public DVC pipelines}

\paragraph{Corpus} On 13 October 2026 we collected candidate repositories from two sources: the results of a Sourcegraph code search for files named \texttt{dvc.yaml}, and GitHub repository searches for DVC- and MLOps-related topics and README text. Forks and archived repositories were excluded. Of \nCandidates{} candidates, \nRootDvc{} had a \texttt{dvc.yaml} at the root of the default branch and \nCloned{} could be cloned. The filters in Table~\ref{tab:corpus} were fixed before any result was examined. Repositories sharing a root commit are copies of the same history, and only the copy with the most analyzable commits was kept. The final corpus has \nRepos{} repositories and \nCommits{} commits; the median repository contributes \medianCommits{} commits and defines \medianStages{} stages.

\begin{table}[t]
\centering
\caption{Corpus construction.}\label{tab:corpus}
\small
\begin{tabular}{@{}lr@{}}
\toprule
Step & Repositories \\
\midrule
Candidates from search & \nCandidates \\
Root \texttt{dvc.yaml} on default branch & \nRootDvc \\
Cloned successfully & \nCloned \\
At least two stages at HEAD & \nTwoStage \\
At least one resolvable Python stage & \nPyStage \\
At least 10 analyzable commits & \nTenCommits \\
One repository per root commit & \nRepos \\
\midrule
Commits analyzed & \nCommits \\
\bottomrule
\end{tabular}
\end{table}

\paragraph{Replay} Each pipeline stage stands for a control whose evidence is its output. For every first-parent commit after \texttt{dvc.yaml} first appears, the replay compares the commit with its parent and decides which stages must re-execute. It emulates DVC's own rule. A stage is invalidated if a declared dependency changed (a file, a directory prefix, or a path templated from parameters), if a declared parameter key changed value, or if the stage definition changed after resolving \texttt{\$\{\ldots\}} templates against \texttt{params.yaml}. Invalidation propagates to every stage that consumes an invalidated stage's outputs, and a change to a \texttt{.dvc} pointer counts as a change to the data it tracks.

\paragraph{Reference dependencies} For each stage, static analysis resolves the entry points in its command: \texttt{python script.py}, \texttt{python -m module}, and script or notebook files named on the command line. It then computes the transitive closure of local Python imports, including package \texttt{\_\_init\_\_} modules and relative imports. String literals in that closure that name configuration files in the repository (YAML, JSON, TOML, INI) are added; a change to such a file counts only if a changed leaf key is named somewhere in the stage's code. A change to Python code counts only if the parsed syntax tree changes after docstrings are removed, so comment and formatting edits are ignored, and a change to a notebook counts only if a code cell changes. Changes to environment files (requirements, lock files, \texttt{pyproject.toml}, Dockerfiles) are reported separately, because they plausibly affect every stage but their relevance cannot be decided statically. The reference set is a lower bound on true dependencies. It misses dynamic imports, files opened through computed paths, and data read outside the repository.

\paragraph{Invalidation rules} Five rules are compared. \emph{Full} reruns every stage after every commit. \emph{Declared} is DVC's rule. \emph{Declared+imports} adds each stage's import closure to its declared dependencies. \emph{Declared+imports+environment} also reruns every Python stage when an environment file changes. \emph{Fail-closed} applies the declared rule but reruns every stage when a changed code, configuration, notebook, or environment file is declared by no stage, which is the conservative fallback of Eq.~(\ref{eq:plan}). For each rule we report the share of stage slots executed and the share of commits in which at least one stage that the reference marks as invalidated is reused (a \emph{stale reuse}). Intervals are 95\% percentile intervals from 2,000 bootstrap resamples of repositories.

\paragraph{Stage roles and change types} To see where evidence goes stale, each stage was assigned a role from its name and command: evaluation (names containing eval, test, metric, score, benchmark, assess, report, or compare), training (train, fit, tune, hyperparameter search, fine-tuning), data preparation (ingest, load, download, prepare, clean, split, feature, transform, embed, index, validate), or other. Commits were grouped by the categories of files they changed.

\paragraph{Audit} From the commits in which the declared rule reuses an invalidated stage, we drew a random sample of \auditN{} and examined each flagged stage, the import chain from its entry point to the changed file, and the diff. Each event was labeled \emph{material} (a real undeclared dependency whose change plausibly alters the stage's output or measurement), \emph{benign} (a real undeclared dependency whose change does not, for example a new unused constant or a type annotation), or \emph{analyzer error} (no real dependency, or DVC would in fact rerun the stage). An earlier audit round on a separate sample was used only to correct the analyzer before the final run. After the main results, we also measured declaration hygiene at each repository's HEAD (whether each stage declares its entry script and its local imports) and recomputed the main metrics on corpus subsets. These analyses were not pre-specified.

\paragraph{Shadow re-execution} Shadow runs re-execute a sample of reused stages and compare their outputs with the cached ones. We replayed each repository's history in first-parent order. At every commit, each stage that the declared rule reused was shadow-executed with probability $\rho\in\{1,5,10,20\}\%$. A shadow run of a stale stage exposes the problem with probability $q$: $q=1$ if any stale input changes the output, or $q=\shadowQ$, the audited share of material changes. Two responses to an exposure were compared: declaring the dependency that was missed, or switching the repository to declarations augmented with each stage's import closure. Each configuration was repeated 200 times.

\subsection{RQ3: cost}

Gate verification time was measured for 9 to 1,000 mandatory controls (200 repetitions each). The full prototype was also run over the commit histories of the corpus (up to 400 commits per repository): each stage becomes a control whose dependencies are its declared dependency paths, parameter keys, and upstream outputs; at each commit the planner selects stages, the selected stages' executors sign fresh records, and the gate admits the candidate from the merged bundle. Parallel throughput was measured by admitting 2,000 candidates, each with 50 signed records, with one gate per worker process and candidates divided among 1 to 12 processes. The planner core was also benchmarked on synthetic sparse graphs of up to 500,000 nodes. All timings were taken on one Apple Silicon laptop (14 logical CPUs) running Python 3.14.

\subsection{RQ4: re-execution of flagged stages}\label{sec:rerundesign}

The audit judges whether a change could alter a stage's output; re-execution measures it. For commits in which the declared rule reused a stage that the reference marks as invalidated, the stage was executed twice at the parent commit and once at the flagged commit, and its declared outputs were compared. If the two parent runs agree and the commit run differs, the reused output was not what a fresh run produces. A flagged stage was eligible if it and every stage upstream of it (at most six stages) run a single Python entry point, every input that no stage in that chain produces is a file in the repository (no data held in a DVC remote), the code does not contact the network, and the environment names no GPU-only packages. Upstream stages were executed in order before the flagged stage, and at most three events were taken per repository. Each repository's environment was installed with uv into its own Python 3.11 environment from \texttt{requirements.txt} or \texttt{pyproject.toml}, together with a small common base (NumPy, pandas, scikit-learn, SciPy, PyYAML, Matplotlib, seaborn, joblib). If the requirements could not be resolved together, each line was installed on its own; if a run failed on a missing module, that module was installed and the run retried, at most twice. Because the code comes from unvetted public repositories, every stage run executed under the macOS sandbox with networking disabled, reads of the user's home directory denied, and writes confined to a scratch directory, with a three-minute limit per run. Dependency installation needs the network and ran outside the sandbox. The protocol was revised once. A first version, which required a requirements file, ran no upstream stages, and allowed no deep-learning frameworks, completed too few events to be informative; we report its outcome alongside the final one.

\section{Results}\label{sec:results}

\subsection{RQ1: which manipulations the gate stops}

Table~\ref{tab:scenarios} reports the decisions. The full gate released both legitimate candidates, held the honest failure, and held eight of the nine manipulations. The planner-trusting gate held only expired evidence, because a correct planner re-ran the expired control and the fresh run failed. The signature-only gate also held tampered records, missing evidence, and candidates built under a superseded policy. Every gate released the undeclared-dependency scenario (S10): the stored fairness record was valid for the declared inputs, and the cohort definition that changed was not among them. This is the limit stated in the remark of Section~\ref{sec:model}, and it is the failure mode that the replay measures.

\begin{table*}[t]
\centering
\caption{Gate decisions in the manipulation scenarios. R = release, H = hold. Correct decisions are marked with an asterisk.}\label{tab:scenarios}
\small
\begin{tabularx}{\textwidth}{@{}lXccc@{}}
\toprule
 & Scenario & Planner-trusting & Signature-only & Full gate \\
\midrule
S0 & Documentation-only change. Only provenance is invalidated and re-run; other evidence is reused. & R$^{*}$ & R$^{*}$ & R$^{*}$ \\
S1 & Dataset change, correct re-run. Invalidated controls re-run and pass. & R$^{*}$ & R$^{*}$ & R$^{*}$ \\
S2 & Planner omits an invalidated control. Fairness evidence for the old dataset is presented. & R & R & H$^{*}$ \\
S3 & Cross-context replay. Evidence certified for a different deployment is reused. & R & R & H$^{*}$ \\
S4 & Tampered evidence record. A failing privacy result is edited to pass after signing. & R & H$^{*}$ & H$^{*}$ \\
S5 & Self-certified result. The model team signs its own robustness pass. & R & R & H$^{*}$ \\
S6 & Expired evidence. Evidence older than its maximum age is presented. & H$^{*}$ & R & H$^{*}$ \\
S7 & Policy changed before release. The candidate was planned and evaluated under a policy that has since changed. & R & H$^{*}$ & H$^{*}$ \\
S8 & Risk-tier downgrade. A tier-3 system is submitted as tier 1. & R & R & H$^{*}$ \\
S9 & Mandatory control never run. Explainability evidence is absent. & R & H$^{*}$ & H$^{*}$ \\
S10 & Undeclared semantic dependency. Cohort definition changes but is absent from fairness dependencies. & R & R & R \\
S11 & Control fails honestly. A re-run fairness control fails. & H$^{*}$ & H$^{*}$ & H$^{*}$ \\
\midrule
\multicolumn{2}{@{}l}{Manipulations held (S2--S10)} & 1/9 & 3/9 & 8/9 \\
\bottomrule
\end{tabularx}
\end{table*}

\subsection{RQ2: reuse and stale reuse in real pipelines}

The commits span the usual work of a pipeline project: 34\% touched code, 21\% configuration or parameter files, 11\% DVC data pointers, 12\% environment files, and 7\% the pipeline definition itself, while 18\% changed only documentation. Under the declared rule, 71\% of commits re-executed no stage at all.

Table~\ref{tab:rules} and Figure~\ref{fig:frontier} compare the five rules. Under DVC's declared rule, pipelines executed \execDecl\% of stage slots, so \avoidDecl\% of stage results were reused. In \unsafeDecl\% of commits (95\% interval \unsafeDeclLo--\unsafeDeclHi\%), at least one reused stage had an imported module, an entry script, or a referenced configuration key that changed. Counting environment-file changes as well raises the share to \unsafeAllDecl\%. Stale reuse is concentrated: \reposAffected{} of the \nRepos{} pipelines (\reposAffectedPct\%) were affected at least once, and the median pipeline was never affected. Among stages that record metrics or plots, the outputs most likely to be used as evaluation evidence, the declared rule reused a stale stage in \evalUnsafeDecl\% of commits. Code dependencies (an imported module or the stage's own entry script) caused \causeCodePct\% of the flagged commits; configuration keys read through a string reference caused the rest.

The declarations themselves explain why. At each repository's HEAD, \hyAny\% of the \hyStages{} stages with a Python entry point had at least one undeclared local code dependency, and \hyReposAny{} of \hyRepos{} pipelines (\hyReposAnyPct\%) contained such a stage. \hyEntry\% of stages did not declare their own entry script, and \hyImport\% of the stages that import local modules left at least one of them undeclared. Most of these gaps are harmless most of the time, because the undeclared file rarely changes without a declared one; the replay measures how often it does. Table~\ref{tab:robust} shows that the result is not driven by a few large repositories or by abandoned projects. Across the subsets, the declared rule reused a stale stage in \robLo--\robHi\% of commits, and the import-closure rule in \robCodeLo--\robCodeHi\%.

\begin{table*}[t]
\centering
\caption{Sensitivity of the replay results to the corpus. Executed and stale-reuse shares as in Table~\ref{tab:rules}; brackets are 95\% bootstrap intervals over repositories.}\label{tab:robust}
\small
\begin{tabularx}{\textwidth}{@{}Xrr*{3}{>{\centering\arraybackslash}p{2.7cm}}@{}}
\toprule
Subset & Repositories & Commits & Declared: executed (\%) & Declared: stale reuse (\%) & Declared + imports: stale reuse (\%) \\
\midrule
All repositories & 714 & 38,083 & 16.6 & 3.3 [2.4, 4.5] & 0.4 \\
At least 50 analyzed commits & 145 & 25,992 & 16.6 & 3.6 [2.3, 5.4] & 0.4 \\
At least 5 stages at HEAD & 330 & 19,355 & 13.4 & 4.8 [3.3, 6.7] & 0.5 \\
At least 5 GitHub stars & 22 & 5,343 & 46.4 & 3.0 [0.6, 10.3] & 0.1 \\
Last commit in 2026 & 548 & 28,696 & 12.5 & 3.0 [2.0, 4.2] & 0.3 \\
Largest 1\% of repositories removed & 707 & 27,768 & 15.3 & 4.2 [3.2, 5.2] & 0.5 \\
Repository-weighted mean & 714 & 38,083 & 18.0 & 3.2 & 0.5 \\
\bottomrule
\end{tabularx}
\end{table*}

\begin{table*}[t]
\centering
\caption{Invalidation rules on \nCommits{} commits from \nRepos{} DVC pipelines. Executed: share of stage slots re-executed. Stale reuse: share of commits in which at least one stage invalidated under the reference is reused, counting code and configuration dependencies (environment changes in the last column). Brackets: 95\% bootstrap intervals over repositories.}\label{tab:rules}
\small
\begin{tabular}{@{}lccc@{}}
\toprule
Rule & Executed (\%) & Stale reuse, code/config (\%) & Stale reuse, incl.\ environment (\%) \\
\midrule
Full & 100.0 & 0.0 & 0.0 \\
Declared (DVC) & \execDecl{} [\execDeclLo, \execDeclHi] & \unsafeDecl{} [\unsafeDeclLo, \unsafeDeclHi] & \unsafeAllDecl \\
Declared + imports & \execCode{} [\execCodeLo, \execCodeHi] & \unsafeCode{} [\unsafeCodeLo, \unsafeCodeHi] & \unsafeAllCode \\
Declared + imports + environment & \execEnv{} [\execEnvLo, \execEnvHi] & \unsafeEnv{} [\unsafeEnvLo, \unsafeEnvHi] & \unsafeAllEnv \\
Fail-closed fallback & \execFail{} [\execFailLo, \execFailHi] & \unsafeFail{} [\unsafeFailLo, \unsafeFailHi] & \unsafeAllFail \\
\midrule
Reference (code/config) & \execRefCode & 0.0 & --- \\
\bottomrule
\end{tabular}
\end{table*}

\begin{figure}[t]
\centering
\includegraphics[width=\columnwidth]{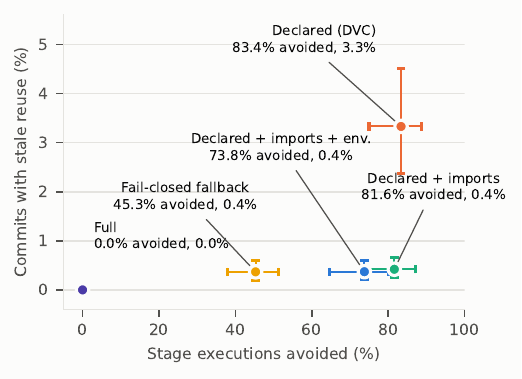}
\caption{Avoided stage executions against commits with stale reuse, for each invalidation rule. Error bars: 95\% bootstrap intervals over repositories.}\label{fig:frontier}
\end{figure}

The audit (Table~\ref{tab:audit}) confirmed a real undeclared dependency in \auditReal{} of \auditN{} sampled events (\auditRealPct\%; Wilson 95\% interval \auditRealLo--\auditRealHi\%). In \auditMaterial{} events (\auditMaterialPct\%; \auditMaterialLo--\auditMaterialHi\%), the change plausibly altered the reused output. Examples include a rewritten system prompt and a changed generation configuration for LLM agents under benchmark, an evaluation metric switched from \texttt{pmin} to \texttt{pmax} in an R script that its stage did not declare, a text-cleaning step re-enabled in an undeclared transformation module, a different model constructor in a training module, and a changed learning-rate warmup schedule. The benign events were dominated by added helper functions and constants, logging and experiment-tracking changes, and refactorings that preserve behavior. All four analyzer errors were metrics files written by another stage, which a shared script named as a string but did not read. Combining the audit with the replay, roughly 1.2\% of commits reused a stage whose change plausibly altered its output.

\begin{table}[t]
\centering
\caption{Manual audit of \auditN{} randomly sampled events in which the declared rule reused an invalidated stage.}\label{tab:audit}
\small
\begin{tabularx}{\columnwidth}{@{}Xrr@{}}
\toprule
Label & Events & Share (\%) \\
\midrule
Material: real dependency, output plausibly changed & \auditMaterial & \auditMaterialPct \\
Benign: real dependency, behavior-neutral change & \auditBenign & \auditBenignPct \\
Analyzer error & \auditError & \auditErrorPct \\
\bottomrule
\end{tabularx}
\end{table}

Table~\ref{tab:breakdown} shows where stale reuse occurs. Evaluation stages, whose outputs are the measurements that governance decisions rest on, were the most exposed: \roleEvalStale\% of reused evaluation stages were stale, against \roleDataStale\% of data-preparation stages, \roleTrainStale\% of training stages, and \roleOtherStale\% of other stages. Evaluation stages held \roleEvalSlots\% of stage slots but accounted for \roleEvalShare\% of stale stage reuses. By change type, \chCodeStale\% of commits that touched code led to stale reuse, against \chConfigStale\% of commits that touched configuration and \chDataStale\% of commits that changed only DVC data pointers; no commit that changed only documentation did.

\begin{table}[t]
\centering
\caption{Stale reuse under the declared rule by stage role (top) and by the type of file a commit changed (bottom).}\label{tab:breakdown}
\small
\begin{tabularx}{\columnwidth}{@{}X*{3}{>{\centering\arraybackslash}p{1.45cm}}@{}}
\toprule
Stage role & Stage slots & Stale (\% of reused) & Share of all stale (\%) \\
\midrule
Data preparation & 75,793 & 2.0 & 32 \\
Training & 33,689 & 2.8 & 18 \\
Evaluation & 31,043 & 4.9 & 29 \\
Other & 68,342 & 1.7 & 21 \\
\midrule
Commit touched & Commits & Stale reuse (\% of commits) & \\
\midrule
Code & 13,047 & 8.9 & \\
Configuration or parameters & 8,168 & 5.7 & \\
Notebook & 1,814 & 5.7 & \\
Tests & 4,683 & 5.9 & \\
Pipeline definition & 2,557 & 6.1 & \\
Environment files & 4,579 & 4.0 & \\
CI configuration & 3,864 & 1.8 & \\
DVC data pointer & 4,122 & 0.9 & \\
Documentation only & 6,930 & 0.0 & \\
\bottomrule
\end{tabularx}
\end{table}

\subsection{A case: the Cook County residential valuation pipeline}\label{sec:case}

The Cook County Assessor's Office publishes the pipeline behind its residential automated valuation model, which produces initial assessed values for single- and multi-family residential properties in Cook County, Illinois~\cite{ccao2026avm}. Its \texttt{dvc.yaml} declares data inputs, parameters, and outputs for each stage but, until August 2024, not the R scripts that the stages run. The project documentation tells users to run the pipeline with \texttt{dvc repro}, noting that each stage runs only if necessary. On 24 January 2024 a commit titled ``Boot 0 or sub-0 values from ratio stats'' changed one line of \texttt{pipeline/03-evaluate.R}, replacing \texttt{pmin} with \texttt{pmax} in the bound applied to estimated values before the assessor-specific ratio statistics (COD, PRD, and PRB) are computed, so that estimates are floored at 1 rather than capped at 1. These statistics measure the uniformity and vertical equity of assessments, and the evaluation stage reports them for every geography and property class. Because the script was not a declared dependency, \texttt{dvc repro} would have reused the previous evaluation outputs, and a gate checking declared dependencies would have accepted them as valid. Over the \ccaoBefore{} commits before the fix, the replay found stale reuse in \ccaoStale{} (\ccaoStalePct\%), including \ccaoEval{} stale reuses of the evaluation stage. In August 2024 the maintainers added each stage's script to its dependencies (``Edit DVC deps to include cmd run files''), and the replay found no stale reuse in the \ccaoAfter{} commits that followed. The case illustrates the failure mode in a maintained production pipeline. It also gives an independent check on the analyzer: the change the maintainers made is the one the analysis recommends, and it removed every flagged event.

\subsection{RQ3: trading avoided work against stale reuse}

Adding each stage's import closure to its declaration (Declared + imports) lowered the share of commits with stale code or configuration reuse from \unsafeDecl\% to \unsafeCode\% and raised executed stage slots only from \execDecl\% to \execCode\%. The fail-closed fallback reached \unsafeFail\%, close to the import-closure rule, but executed \execFail\% of stage slots, because many commits change some code or configuration file that no stage declares. Treating environment changes as dependencies of every Python stage closed most of the remaining environment exposure, at \execEnv\% of slots executed. The residual stale reuse under the augmented rules comes from configuration keys read by string reference, which the import closure does not add.

Shadow re-execution is a weaker substitute (Figure~\ref{fig:shadow}). Re-executing \shRho\% of reused stages and declaring each dependency that a shadow run exposes reduced stale-reuse commits by only \shFixTen\% when every stale input changes the output, and by \shFixTenMat\% when only the audited share of material changes is detectable. Undeclared dependencies keep arriving through new files, so fixing them one at a time lags behind. Using the first exposure as a signal to switch the repository to import-closure declarations did better, reducing stale reuse by \shSwitchTen\% (\shSwitchTenMat\% under the conservative assumption), but still less than adopting the import closure from the start (\shUpfront\%). At \shRho\% sampling, \shDetTen\% of affected repositories had at least one exposure during their history. Shadow runs are best used to detect that declarations are unsound, not to compensate for it.

\begin{figure}[t]
\centering
\includegraphics[width=\columnwidth]{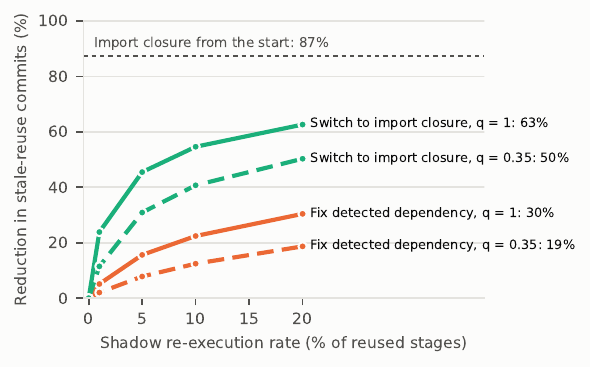}
\caption{Reduction in commits with stale reuse when a share of reused stages is re-executed as shadow runs, for two responses to an exposure and two exposure probabilities $q$ (200 repetitions per point). The dotted line is the reduction from adopting import-closure declarations at the start.}\label{fig:shadow}
\end{figure}

Verification is inexpensive relative to the controls it protects (Figure~\ref{fig:cost}). Gate time grew linearly at about \gatePerRecord{}~$\mu$s per record, dominated by Ed25519 verification: the 95th percentile was \gateNineP{}~ms for nine mandatory controls and \gateThousandP{}~ms for 1,000. The planner selected controls in under \planThousandP{}~ms at 1,000 controls. Replaying the corpus histories through the full prototype covered \grCommits{} commits from \grRepos{} pipelines. The 95th-percentile time to plan, sign, and admit a candidate was \grEteP{}~ms, most of it spent hashing declared directory dependencies; the gate alone took \grGateP{}~ms at the 95th percentile, and the slowest commit, in a pipeline with \grStagesMax{} stages, took \grEteMaxS{}~s. The prototype's planner selected the same number of stages as the replay's declared rule in \grAgreePct\% of commits, and the gate released all \grUnsafe{} commits in which the replay found stale reuse. This confirms on real histories that a gate checking declared dependencies cannot see an undeclared one. Verification parallelizes across candidates: one process verified \tpOne{} records per second and twelve processes \tpTwelve{}, or \tpCandTwelve{} candidates with 50 controls each per second. The planner core processed sparse graphs of 500,000 nodes in \pbPNinetyFive{}~ms at the 95th percentile, with \pbMem{}~MiB of peak traced memory.

\begin{figure}[t]
\centering
\includegraphics[width=\columnwidth]{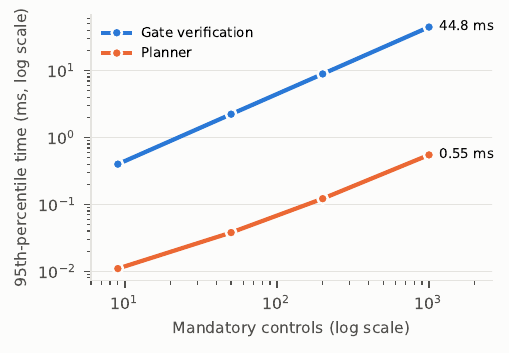}
\caption{Gate verification and planning time against the number of mandatory controls (95th percentile of 200 repetitions).}\label{fig:cost}
\end{figure}

\subsection{RQ4: re-execution of flagged stages}\label{sec:rerun}

Of the 124 flagged events in 39 repositories that met the eligibility conditions, 40 were sampled. Three events, from three repositories, completed all three runs with agreeing parent runs. In one of them the stage's output at the flagged commit differed from its output at the parent commit, and in two it did not. In one further event the stage ran at the parent commit but failed at the flagged commit, so reuse would have certified a stage that no longer ran. Four events gave different outputs in the two parent runs and were set aside. The remaining events could not be reproduced offline from the repository: a file that no declared dependency provides was missing (5); a package the environment did not declare was missing (5); the code raised another error (11); the code tried to reach the network (2); the requirements could not be installed (6); the stage ran but wrote none of its declared outputs (1); the checkout could not be exported (2). Missing inputs and missing packages are incomplete declarations in their own right: a stage whose data dependencies and environment were fully declared would run from a checkout. A first, stricter version of the harness, which required a requirements file and did not run upstream stages, completed one of 22 sampled events; its output changed.

The event whose output changed is a churn-prediction pipeline. Its dataset stage is invoked as \texttt{python src/\allowbreak data/\allowbreak load\_data.py \allowbreak -{}-config=\allowbreak params.yaml}, but \texttt{params.yaml} is neither a declared dependency nor a declared parameter. The flagged commit added 13 features to the feature list in that file, and the stage's output, the raw training table, changed accordingly; under \texttt{dvc repro} the old table would have been reused. One of the two identical outcomes was flagged through a metrics file that another stage writes, which is the pattern the audit classified as an analyzer error, so its identical output is consistent with that classification. The re-execution results are few, and they mostly show how rarely a public pipeline can be re-executed from what it declares. Where re-execution was possible, it confirmed both kinds of event the audit described: stale reuse that changes the evidence, and stale reuse that does not.

\section{Discussion}\label{sec:discussion}

\subsection{What determines whether reuse is safe}

The scenarios and the replay point to the same conclusion from opposite directions. A gate that checks signer authorization, policy and context binding, dependency digests, and freshness stops the manipulations that weaker gates miss, and it does so in milliseconds. It cannot stop reuse that the declarations themselves permit. In the public pipelines, declarations were incomplete often enough to matter: roughly one commit in thirty reused a stage whose code or configuration had changed, and nearly one pipeline in three was affected at least once. Most of these changes were behavior-neutral, which explains why incomplete declarations persist unnoticed. A minority changed what the reused stage computes, and for evaluation stages that means a release decision resting on a measurement of something else.

Three practices follow. First, derive code dependencies automatically. The import closure is cheap to compute and removed most stale reuse while costing about \costCode{} percentage points of executions. Second, treat the environment as a dependency. Requirements and lock files changed in \cmEnv\% of commits, and their effect cannot be ruled out statically; pinning environments by image or lock-file digest makes this dependency explicit. Third, sample reused controls for shadow re-execution, but treat a discrepancy as a signal that the declarations are unsound rather than as a single dependency to patch. The replay shows why: at a \shRho\% sampling rate, patching each exposed dependency removed \shFixTen\% of stale-reuse commits, while switching to automatic declarations after the first exposure removed \shSwitchTen\%. Because most stale reuse is benign, comparing shadow outputs, rather than inputs, also separates the material cases from the harmless ones.

\subsection{Scalability and efficiency}

Reuse avoided most stage executions in every rule except the fail-closed fallback, and the augmented declarations kept that saving while removing most of the risk. The avoided executions are the efficiency gain. The public repositories do not record stage runtimes or energy, so the gain is reported as executions rather than energy. Verification adds tens of microseconds per record and planning remains sub-millisecond for policies of hundreds of controls, so neither limits release frequency. At organizational scale the expensive part is maintaining sound declarations, which is why automatic derivation matters more than faster verification.

\subsection{Running the gate at scale}

The gate is stateless apart from the policy and the keyring, so it can be replicated and candidates can be sharded across replicas by pipeline. The measured throughput of one laptop process exceeds the commit rate of the whole corpus by orders of magnitude, so verification cost is not the obstacle to deployment. Three integration points matter more. First, evidence records map directly onto in-toto attestation statements~\cite{torresarias2019intoto}: the artifact digests are the subject, and the measurement, decision rule, contract digest, and context digest form the predicate. Signing them through Sigstore~\cite{newman2022sigstore} with the executor's workload identity, and logging them in a transparency log, gives the append-only store that the threat model assumes. Second, declared dependencies should not be written by hand. Lineage events such as OpenLineage run facets~\cite{openlineage2026} record the inputs that a job actually read, and combined with the static import closure studied here they can populate a control's contract automatically. Third, on shared clusters the gate fits naturally at admission time, before a release job is scheduled, so that a candidate with stale evidence is held without consuming compute. In HPC and data-center settings the saving that matters is the evaluation work avoided, and the replay shows that most of it can be kept while removing most stale reuse.

\subsection{Threats to validity}

\emph{Construct validity.} The reference dependency set is produced by static analysis and is itself incomplete. It misses dynamic imports, computed file paths, and data outside the repository, so the reported stale-reuse rates are lower bounds. It can also over-approximate, because importing a module does not mean using the part that changed; the audit measures this, and the benign category is large. \emph{Internal validity.} The analyzer emulates DVC's invalidation rule rather than running DVC, and an earlier audit round found two behaviors that needed emulating (template resolution and Windows path separators) before the final run. \emph{External validity.} Public DVC repositories include many course and personal projects~\cite{kalliamvakou2014perils}. Commits are not release candidates, and industrial pipelines may declare dependencies more carefully or less. Results are reported per commit and per repository to show how concentrated the effect is. \emph{Re-execution.} Only stages that run offline from a plain checkout could be re-executed, which favors small projects with data in the repository; the re-execution results describe that subset. \emph{Prototype.} The gate is not integrated with an orchestrator, model registry, or transparency log, and its scenarios test logic under the stated threat model rather than resistance to a compromised signer.

\section{Conclusion}\label{sec:conclusion}

Reusing assurance evidence is safe under conditions that can be stated precisely. A gate can enforce all of them except one, the soundness of dependency declarations, and that one is violated in real ML pipelines often enough to matter. In \nRepos{} public DVC pipelines, declared-dependency reuse avoided \avoidDecl\% of stage executions and reused a stale stage in \unsafeDecl\% of commits. Adding static import closures removed most of that exposure while keeping most of the saving. Evaluation stages, whose outputs are the evidence itself, were the most exposed, and a public-sector valuation pipeline showed the same gap until its maintainers closed it in the way the analysis recommends. Governance platforms that reuse evidence should derive dependencies automatically, treat environments as dependencies, use sampled re-execution to detect unsound declarations rather than to repair them one at a time, and keep the gate independent of the planner. Replaying industrial pipelines with measured stage runtimes, and tracing dependencies dynamically rather than statically, are the most useful next steps.

\section*{CRediT authorship contribution statement}
\textbf{Atanu Mukhopadhyay:} Conceptualization, Methodology, Software, Validation, Formal analysis, Investigation, Data curation, Visualization, Writing -- original draft, Writing -- review \& editing. \textbf{Shravya Reddy Nalla:} Software, Formal analysis, Validation, Writing -- review \& editing.

\section*{Declaration of competing interest}
The authors declare no known competing financial interests or personal relationships that could have appeared to influence the work reported in this paper.

\section*{Funding}
This research did not receive any specific grant from funding agencies in the public, commercial, or not-for-profit sectors.

\section*{Data availability}
The analysis code, the prototype, the list of repositories with the commit identifiers analyzed, the audit labels, and the summary results are provided as ancillary files with this preprint. The complete per-commit and per-stage results are available from the authors. The mined repositories are public; their contents are not redistributed.

\section*{Declaration of generative AI and AI-assisted technologies in the manuscript preparation process}
During the preparation of this work, the authors utilized OpenAI Codex and Anthropic Claude to check English grammar, spelling, and references. After employing these tools, the authors reviewed and edited the content as needed, taking full responsibility for the published article's final content.

\end{document}